\crefname{section}{Sec.}{Secs.}
\Crefname{section}{Section}{Sections}
\Crefname{table}{Table}{Tables}
\crefname{table}{Tab.}{Tabs.}
\newcommand{\lmse}{\calL_\text{MSE}}
\newcommand{\lml}{\calL_\text{ML}}
\newcommand{\calL}{\mathcal{L}}
\newcommand{\calN}{\mathcal{N}}
\newtheorem*{lemma}{Lemma}
\def\1{\bm{1}}
\DeclareMathAlphabet{\mathsfit}{\encodingdefault}{\sfdefault}{m}{sl}
\SetMathAlphabet{\mathsfit}{bold}{\encodingdefault}{\sfdefault}{bx}{n}
\newcommand{\E}{\mathbb{E}}
\newcommand{\R}{\mathbb{R}}
\DeclareMathOperator*{\argmax}{arg\,max}
\DeclareMathOperator*{\argmin}{arg\,min}
\newcommand{\rev}[1]{{\color{black}#1}}
\begin{document}

\title{Deep Injective Prior for Inverse Scattering}

\author{AmirEhsan Khorashadizadeh$^*$ $^{\orcidlink{0000-0003-0660-6823}}$, Vahid Khorashadizadeh$^*$ $^{\orcidlink{0000-0001-9786-7812}}$,
Sepehr Eskandari$^{\orcidlink{0000-0003-2831-6288}}$,\\
Guy A.E. Vandenbosch$^{\orcidlink{0000-0002-5878-3285}}$,
and Ivan Dokmani\'c$^{\orcidlink{0000-0001-7132-5214}}$
\thanks{
Manuscript received December 29, 2022.\\ 
AmirEhsan Khorashadizadeh and Ivan Dokmani\'c were supported by the European Research Council Starting Grant 852821--SWING.

AmirEhsan Khorashadizadeh is with the Department of Mathematics and Computer Science of the University of Basel, 4001 Basel, Switzerland (e-mail: \href{mailto:amir.kh@unibas.ch}{amir.kh@unibas.ch}).

Vahid Khorashadi-Zadeh is with School of Electric and Computer Engineering, ESAT, Ku-Leuven, Leuven, Belgium.
(e-mail: \href{mailto:vahid.khorashadizadeh@esat.kuleuven.be}{vahid.khorashadizadeh@esat.kuleuven.be}).

Sepehr Eskandari is with Microwave Systems, Sensors, and Imaging Lab (MiXIL), University of Southern California, Los Angeles, CA 90089 USA (e-mail: \href{mailto:sepehres@usc.edu}{sepehres@usc.edu}).

Guy A.E. Vandenbosch is with the ESAT-WaveCoRE Research Division, Department of Electrical Engineering, KU Leuven, 3001 Leuven, Belgium.
(e-mail: \href{mailto:Guy.vandenbosch@kuleuven.be}{Guy.vandenbosch@kuleuven.be}).

Ivan Dokmani\'c is with the Department of Mathematics and Computer Science of the University of Basel, 4001 Basel, Switzerland, and also with the Department of Electrical, Computer Engineering, the University of Illinois at Urbana-Champaign, Urbana, IL 61801 USA (e-mail: \href{mailto: ivan.dokmanic@unibas.ch}{ivan.dokmanic@unibas.ch}).

Our implementation is available at \url{https://github.com/swing-research/scattering_injective_prior}.

{$^*$These authors contributed equally.}
}
}

\maketitle
\begin{abstract}
In electromagnetic inverse scattering, the goal is to reconstruct object permittivity using scattered waves. 
While deep learning has shown promise as an alternative to iterative solvers, it is primarily used in supervised frameworks which are sensitive to distribution drift of the scattered fields, common in practice. Moreover, these methods typically provide a single estimate of the permittivity pattern, which may be inadequate or misleading due to noise and the ill-posedness of the problem.
In this paper, we propose a data-driven framework for inverse scattering based on deep generative models. Our approach learns a low-dimensional manifold as a regularizer for recovering target permittivities. Unlike supervised methods that necessitate both scattered fields and target permittivities, our method only requires the target permittivities for training; it can then be used with any experimental setup. We also introduce a Bayesian framework for approximating the posterior distribution of the target permittivity, enabling multiple estimates and uncertainty quantification.
Extensive experiments with synthetic and experimental data demonstrate that our framework outperforms traditional iterative solvers, particularly for strong scatterers, while achieving comparable reconstruction quality to state-of-the-art supervised learning methods like the U-Net.
\end{abstract}


\section{Introduction}
\IEEEPARstart{e}{lectromagnetic} inverse scattering is the problem of determining the electromagnetic properties of unknown objects from how they scatter incident fields. This non-destructive technique finds applications in various fields, such as early detection of breast cancer~\cite{pardo2021modeling}, mineral prospecting~\cite{dai20233dinvnet}, detecting defects and cracks inside objects~\cite{cao2022pavement}, imaging through the walls~\cite{khorashadi2018through} and remote sensing~\cite{song2023semi}.

While inverse scattering is well-posed and Lipschitz stable in theory, when full-aperture continuous measurements are available~\cite{nachman1996global}, it becomes a severely ill-posed inverse problem for a finite number of measurements. This means that even a small perturbation in the scattered fields can result in a significant error in the reconstructed permittivity pattern~\cite{chen2018computational}. Additionally, the nonlinearity of the forward operator, caused by multiple scattering and amplified by higher permittivity contrasts~\cite{chen2018computational}, further complicates the inversion process. All these together make inverse scattering a challenging problem, especially for strong scatterers (objects with large permittivity) and noisy measurements. To address these challenges, an effective regularization technique is necessary to constrain the search space and achieve accurate recovery.

Several optimization-based methods have been proposed to tackle the nonlinearity and ill-posedness of the inverse scattering problem. These include the Born iterative method~\cite{wang1989iterative}, distorted Born iterative method (DBIM)~\cite{chew1990reconstruction}, contrast source inversion (CSI)~\cite{van1997contrast}, and subspace-based optimization (SOM)~\cite{chen2009subspace}. While these methods have demonstrated effectiveness in reconstructing objects with small permittivity variations, they often fall short in accurately reconstructing objects with large permittivity contrasts. These methods typically rely on iterative optimization of a regularized objective, incorporating manually designed regularization terms~\cite{chen2018computational}.

Deep learning has achieved remarkable success in inverse scattering. Most deep learning models employed for inverse scattering adopt a \textit{supervised} learning approach, which trains a deep neural network to regress the permittivity pattern. Some studies~\cite{khoo2019switchnet, ran2019electromagnetic, khorashadizadeh2022conditional} have utilized scattered fields as the input of the neural network.
Despite the satisfactory reconstructions~\cite{khorashadizadeh2022conditional}, these methods are sensitive to changes in the experimental configuration, such as frequency, the number of transmitters and receivers or other real-world factors. Even slight variations in the distribution of scattered fields in test time can lead to a significant degradation in reconstruction quality, requiring costly acquisition of new training data. Back-projections can be used as input to tackle some of these issues~\cite{li2018deepnis,wei2018deep,fajardo2019phaseless}. While this approach yields good reconstructions for objects with small and moderate permittivity, due to the non-linearity the quality of back-projections significantly drops in large permittivity leading to a drop in the reconstruction quality~\cite{khorashadizadeh2022conditional}. Moreover, supervised learning methods are vulnerable to adversarial attacks~\cite{madry2017towards}, which is problematic in medical applications~\cite{antun2020instabilities}. \rev{Importantly, incorporating the well-established physics of the scattering problem (i.e., the forward operator) to improve the generalization capability is not straightforward in such deep learning models~\cite{chen2020review, fei2022fast, shan2022neural, zhou2022deep, liu2022som, guo2021physics}}.

To tackle these issues, we propose a deep learning approach to inverse scattering using \textit{injective} generative models. The proposed method adopts an unsupervised learning framework---the training phase uses only the target permittivity patterns, and the physics of scattering is fully incorporated into the solution. Deep generative models such as generative adversarial networks (GANs)~\cite{goodfellow2014generative,radford2015unsupervised}, variational autoencoders (VAEs)~\cite{kingma2013auto}, normalizing flows~\cite{dinh2014nice,dinh2016density,kingma2018glow} and diffusion models~\cite{ho2020denoising} belong to a class of unsupervised learning methods and train a deep neural network to transform the samples of a simple (Gaussian) distribution into samples that resemble the target data distribution. Recently, deep generative models (DGM) have been used as a prior for solving inverse problems~\cite{bora2017compressed, ongie2020deep, kothari2021trumpets, kawar2022denoising, vlavsic2022implicit, khorashadizadeh2022funknn}. By leveraging a trained generator on a dataset of target images (the solutions of a given inverse problem), one can explore the latent space of the generator to find a latent code yielding a solution that aligns with the given measurements. 

The choice of generative model is of paramount importance to provide an effective regularization for solving \textit{ill-posed} inverse problems. While GANs have been used as generative priors for inverse problems~\cite{bora2017compressed, kelkar2021prior, karras2019style, hussein2020image}, they are unstable in training~\cite{thanh2020catastrophic,arjovsky2017towards} and result in local minima in iterative approaches~\cite{bora2017compressed}. Normalizing flows resolve some of these issues~\cite{asim2020invertible, whang2020compressed, pmlr-v202-liu23au}, however, they are computationally expensive to train and often do not provide sufficient regularization for highly ill-posed inverse problems. Injective normalizing flows~\cite{brehmer2020flows, kothari2021trumpets, ross2021tractable}, specifically designed for solving ill-posed inverse problems, alleviated these issues; they benefit from a low-dimensional latent space which serves as an effective regularizer for ill-posed inverse problems. \rev{In a related work, Guo et al.~\cite{guo2022nonlinear} employed VAEs as generative priors for inverse scattering.}

In this paper, we use injective flows as generative priors for full-wave inverse scattering. The proposed approach has a significant advantage: it only requires training on the target permittivity patterns and does not require any training data from scattered fields. Once the generator is trained, it can be used to solve inverse scattering problems in arbitrary configurations. This property endows the model with robustness against distribution shifts in the measurements as well as to adversarial attacks. \rev{In contrast to the work of Guo et al.~\cite{guo2022nonlinear}, the invertibility of our generator allows us to perform optimization in both latent and data spaces, providing great flexibility in choosing the scattering solver. Additionally, while Guo et al.~\cite{guo2022nonlinear} require a data-driven initialization, our proposed method can leverage both back-projection and data-driven initializations (among others), making it adaptable to different scenarios and reducing dependence on the particularities of a specific starting point.}
We show that the proposed framework significantly outperforms traditional iterative solvers with reconstructions of comparable or better quality compared to highly-successful supervised methods such as the U-Net~\cite{ronneberger2015u}. 

\rev{All the aforementioned methods reconstruct a single point estimate from the permittivity pattern given the measurements. A point estimate, however, is often insufficient or misleading due to the ill-posedness of the inverse scattering problem. This limitation can be tackled by applying Bayesian frameworks based on deep learning networks~\cite{wei2020uncertainty, khorashadizadeh2022conditional} to generate multiple estimates of the permittivity and perform uncertainty quantification (UQ). 
However, these methods are supervised and suffer from the aforementioned issues. Our second contribution is to leverage our pre-trained injective generator to develop a Bayesian framework that produces multiple estimates of the permittivity pattern enabling the uncertainty quantification. Crucially, the proposed method does not rely on scattered fields during training. As we will discuss in Section~\ref{sec: posterior modeling}, this framework requires injectivity and is thus not practicable with non-injective generators like GANs or VAEs.

This paper is organized as follows. Section~\ref{sec: Problem Statement} provides a brief review of the forward and inverse scattering problem. In Section~\ref{sec: normalizing flows}, we present an overview of normalizing flows and injective flows. Our proposed methods for MAP estimation and posterior modeling in inverse scattering are introduced in Sections~\ref{sec: MAP} and~\ref{sec: posterior modeling}. Computational experiments are presented in Section~\ref{sec: Experiments}. Section~\ref{sec: conclusion} discusses the limitations of our approach and provides insights into future work.
}

\section{Forward and Inverse Scattering}
\label{sec: Problem Statement}
\begin{figure}
    \centering
    \includegraphics[width = 0.45\textwidth]{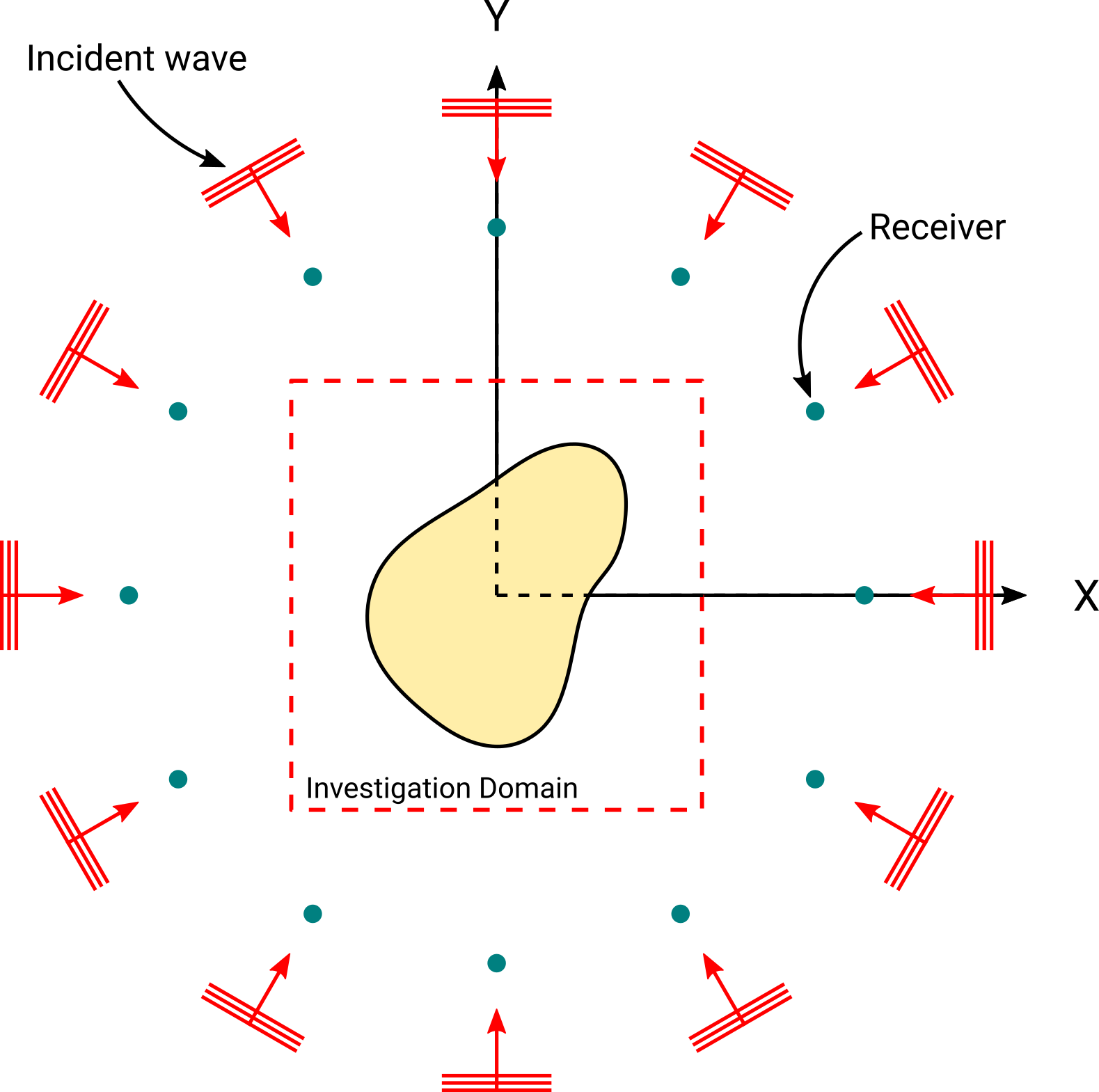}
    \caption{The setup for the inverse scattering problem, red arrows show the incident plane waves; the green circles are the receivers.}
    \label{fig: sensing_geo}
\end{figure}

We begin our discussion with equations governing the 2D forward and inverse scattering problem. We focus on the 2D transverse magnetic ($\text{TM}_z$) case, where the longitudinal direction is along $\hat{z}$. As depicted in Figure~\ref{fig: sensing_geo}, we consider non-magnetic scatterers with permittivity $\epsilon_r$ situated in the investigation domain $\mathcal{D}_{\text{inv}}$, which is a $D \times D$ square. The scatterers are surrounded by a vacuum background with permittivity $\epsilon_0$ and permeability $\mu_0$. The scatterers are illuminated by $N_i$ plane waves with equispaced directions, and $N_r$ receivers are uniformly positioned on a circle with radius $R$ to measure the scattered fields. The forward scattering problem can be derived from the time-harmonic formulation of Maxwell's equations and can be expressed as follows\cite{chen2017spectral},
\begin{equation}
    \nabla \times ( \nabla \times E ^t(r)) - k_0 ^2 \epsilon_r (r) E ^t(r) = i \omega \mu_0 J(r),
    \label{eq: time-harmonic}
\end{equation}
where $E ^t$ represents the total electric field \rev{which has only the $E_z$ component in the $\text{TM}_z$ case}. In addition, $k_0 = \omega \sqrt{\mu_0 \epsilon_0}$ denotes the wavenumber of the homogeneous background, and $J$ corresponds to the contrast current density. The contrast current density, calculated using the equivalence theorem~\cite{rengarajan2000field}, is given by $J(r) = \chi(r) E ^t(r)$, where $\chi(r) = \epsilon_r(r) - 1$ and is referred to as the contrast.  Throughout this paper, the time-dependence factor $\exp(i \omega t)$ with angular working frequency $\omega$ is assumed and will be suppressed for simplicity.

We discretize the investigation domain $\mathcal{D}_{\text{inv}}$ into $N \times N$ units. The state equation can be expressed as,
\begin{equation}
E^t = E^i + G_d \chi E^t, \label{eq: State_Equation}
\end{equation}
where $G_d \in \mathbb{R}^{N^2 \times N^2}$ and $E^{t}$, $E^{i}$ are the total and incident electric fields, respectively; $\chi$ is a diagonal matrix with elements $\chi(n,n) = \epsilon_r(n) - 1$ accounting for the contrast in the medium. On the other hand, the data equation is given by,
\begin{equation}
E^s = G_s \chi E^t + \delta, \label{eq: Data_Equation}
\end{equation}
where $G_s \in \mathbb{R}^{N_r \times N^2}$, $E^s$ denotes the scattered electric fields, and $\delta$ is the additive noise in the measurements. It is worth mentioning that $G_d$ and $G_s$ have closed-form analytical expressions~\cite{chen2018computational}.

We combine~\eqref{eq: State_Equation} and~\eqref{eq: Data_Equation} to obtain a unified expression for the forward model~\cite{chen2018computational},
\begin{equation}
    E^s=G_s \chi(I-G_d \chi)^{-1} E^i + \delta,
\end{equation}
which represents a nonlinear mapping from $\chi$ to $E^s$. For convenience, we define a forward operator $A$ that maps $\chi$ to $E^s$, 
\begin{equation}
    y = A(x) + \delta,
    \label{eq: forward model}
\end{equation}
where $A(\cdot)$ corresponds to the nonlinear forward scattering operator,
\begin{equation}
A(\chi) = G_s \chi {(I-G_d \chi)}^{-1} E^i,
\label{eq: forward operator}
\end{equation}
with $y = E^s$ and $x =\chi$. 
The objective of inverse scattering is to reconstruct the contrast $\chi$ from the scattered fields $E^s$, assuming that $G_d$, $G_s$, incident electric waves $E^i$, and hence the forward operator $A(\cdot)$ are known. In the following section, we will provide a brief overview of deep generative models, focusing specifically on normalizing flows as prior models for inverse problems.

\rev{
\section{Normalizing Flows}
\label{sec: normalizing flows}
Normalizing flows were introduced by Rezende and Mohamed~\cite{rezende2015variational} in the context of variational inference, and by Dinh et al.~\cite{dinh2016density} for density estimation. A normalizing flow $f_\theta$ is an invertible deep neural network, parameterized by a vector of neuron weights $\theta$, that transforms a simple base distribution, typically a Gaussian, $p_Z$, into the target data distribution $p_X$, or an approximation thereof. By transforming a data sample $x$ back to the latent space $z = f_\theta^{-1}(x)$, the likelihood of $x$ can be evaluated as
\begin{equation}
    \log p_X(x) = \log p_Z(f_\theta^{-1}(x)) -  \log | \det J_{f_\theta} |,
    \label{eq: density estimation}
\end{equation}
where $p_Z = \calN(0,I)$ and $J_{f_\theta}$ represents the Jacobian matrix of the neural network $f_\theta$ evaluated at $f_\theta^{-1}(x)$.

Numerous studies have focused on designing invertible neural networks that admit a computationally efficient inverse $f_\theta^{-1}$ and $\log \det$ Jacobian. A staple design block that enables these efficient computations is the so-called coupling layers, introduced by Dinh et al.\cite{dinh2014nice} and further developed in \cite{dinh2016density}. The fact that unlike many other generative models normalizing flows allow for efficient likelihood computation as in~\eqref{eq: density estimation} enables training based on maximum likelihood (ML),
\begin{equation}
    \theta^* = \argmax_\theta \log p_Z(f_\theta^{-1}(x)) -  \log | \det J_{f_\theta} |.
\end{equation}
Normalizing flows also have important limitations. They require bijective neural networks with the same data space dimension throughout the model, resulting in large networks and slow training. Furthermore, as the range of the bijective network is unconstrained and covers the entire space, they do not inherently provide strong regularization for solving ill-posed inverse problems.  In the following section, we will provide a brief review of injective normalizing flows\cite{kothari2021trumpets}, specifically designed for solving ill-posed inverse problems.
}

\subsection{Injective Normalizing Flows}
\label{sec: Trumpets}
\begin{figure*}
    \centering
    \includegraphics[width =0.95\textwidth]{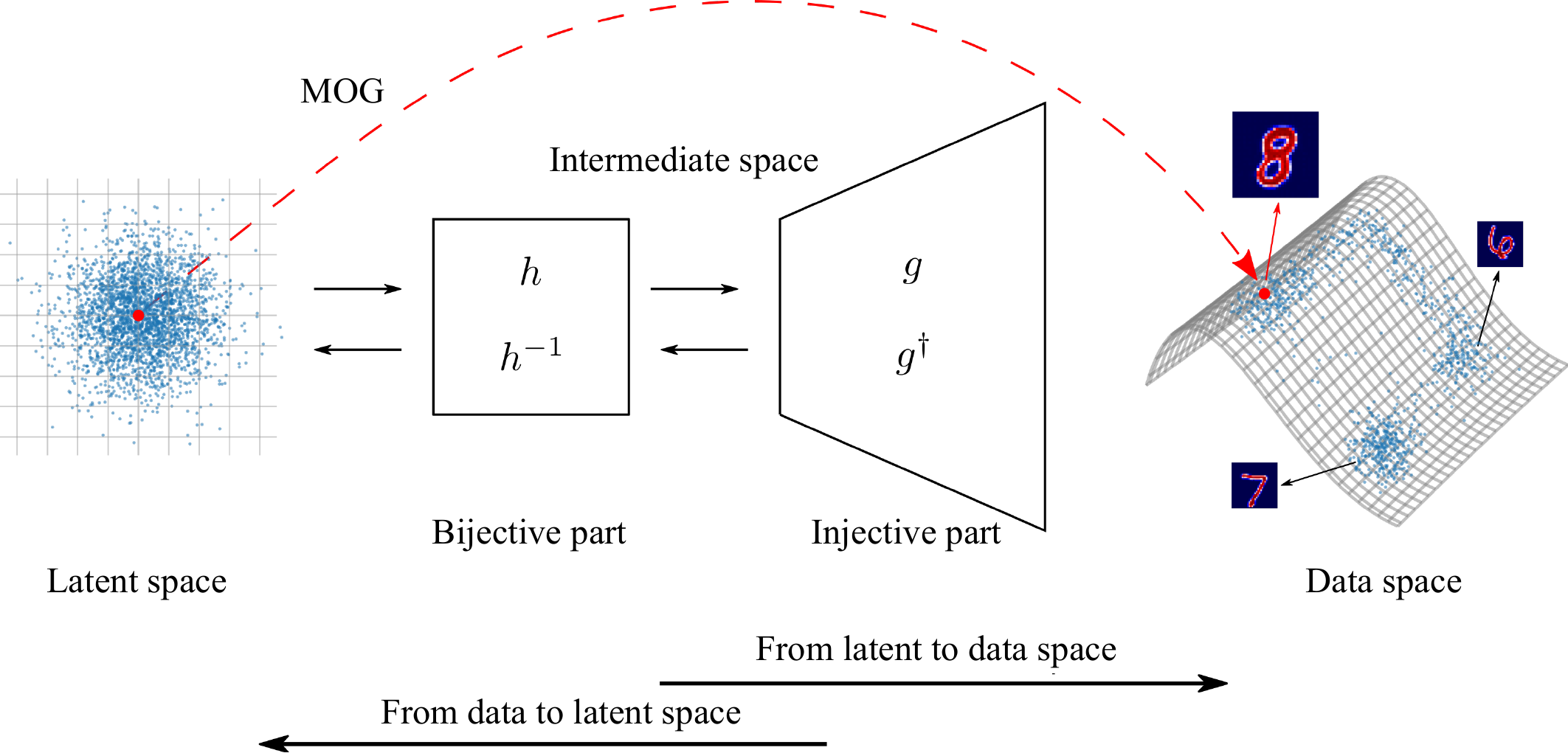}
    \caption{Injective normalizing flows~\cite{kothari2021trumpets} comprise two submodules, a low-dimensional bijective flow $h_\eta$ and an injective network $g_\gamma$ with expansive layers. The MOG initialization, $z = 0$, is illustrated with a red circle in the latent space.}
    \label{fig: Trumpets}
\end{figure*}
\rev{While regular normalizing flows have the same dimension in the latent and data space,}
injective normalizing flows~\cite{brehmer2020flows,kothari2021trumpets, ross2021tractable} map a low-dimensional latent space to the high-dimensional data space using a set of invertible layers. \rev{Injective flows retain the advantages of regular normalizing flows,} including fast inverses and training based on maximum likelihood. As shown in Figure~\ref{fig: Trumpets}, an injective network $f_\theta(z)=g_{\gamma}(h_{\eta}(z))$ with weights $\theta = (\gamma, \eta)$, called a Trumpet, comprises two subnetworks: a bijective part $h_{\eta}$ that maps $\R^d$ to $\R^d$ and an injective part (with expansive layers) $g_{\gamma}$ that maps $\R^d$ to $\R^D$ where $d \ll D$. \rev{Both the bijective and injective subnetworks are composed of revnet blocks. A bijective (injective) revnet block comprises three components: 1) activation normalization, 2) bijective (injective) $1 \times 1$ convolution, and 3) coupling layers:
\begin{enumerate}
    \item Activation normalization,
    \begin{equation}
        \begin{array}{ll}
        \text{\textsc{Forward:}} &  ~ x = \dfrac{z-\mu}{\sigma}  \\
            \text{\textsc{Inverse:}} & ~ z = \sigma x + \mu
        \end{array}
    \end{equation}
    \item $1\times1$ convolution with a kernel $w$,
    \begin{enumerate}
        \item Bijective version:
        \begin{equation}
            \begin{array}{ll}
                    \text{\textsc{Forward:}} & ~ x = w\ast z \\  
                    \text{\textsc{Inverse:}} & ~ z = w^{-1}\ast x
            \end{array}
         \end{equation}
        \item Injective version:

        \begin{equation}
            \begin{array}{ll}
                    \text{\textsc{Forward:}} & ~ x = w\ast z \\  
                    \text{\textsc{Inverse:}} & ~ z = w^{\dagger}\ast x
            \end{array}
         \end{equation}
    \end{enumerate}
     where $w \in \R^{c_{in} \times c_{out}}$ is a $1 \times 1$ convolutional filter, which is simply a matrix multiplication along the channel dimension and $w^\dagger$ is the pseudo-inverse of $w$ (a non-square matrix in the injective dimension-expanding case).
    \item Affine coupling layer
    \begin{equation*}
    \begin{array}{lll}
        \text{\textsc{Forward:}} & x_1 = z_1, \quad x_2 = s(z_1) \circ z_2 + b(z_1)\\
        \text{\textsc{Inverse:}} &z_1 = x_1  , \quad z_2 = s(x_1)^{-1} \circ (x_2 - b(x_1))
    \end{array}
    \end{equation*}
    where $z = [z_1, z_2]^T$ and $x = [x_1, x_2]^2$. The mappings $s$ and $b$ are respectively the scale and the shift networks.
\end{enumerate}
For additional details about the network architecture, please refer to Section~\ref{sec: architecture and training} in the appendix.}

The training process for injective normalizing flows involves two phases, as initially proposed in~\cite{brehmer2020flows}. In the first phase, we adjust the range of the injective generator by optimizing the weights of the injective subnetwork $g_{\gamma}$ to align with the training data,
\rev{
\begin{equation} \label{eq:lmse_conditional}
    \lmse(\gamma) = \\
    \dfrac{1}{N} \sum_{i=1}^N \| x^{(i)}- g_{\gamma}(g_{\gamma}^\dagger(x^{(i)}))\|_2^2,
\end{equation}
where $\{(x^{(i)}, y^{(i)})\}_{i = 1}^{N}$ represents the training data and $g^\dagger$ denotes the layer-wise inverse of the injective subnetwork.

Once the injective subnetwork has been trained for a fixed number of epochs, we move to the second phase where we train the bijective subnetwork $h_{\eta}$ by maximizing the likelihood of the projected training samples in the intermediate space (as shown in Figure~\ref{fig: Trumpets}), 
\begin{equation} \label{eq:LML_conditional}
    \lml(\eta) =
    \dfrac{1}{N}\sum_{i=1}^N \left(-\log p_Z(z^{(i)}) + \log |\det J_{h_{\eta}}|\right),
\end{equation}
where $z^{(i)} = h_{\eta}^{-1}(g_{\gamma}^\dagger(x^{(i)}))$ and $p_Z = \calN(0,I)$.
} Upon completion of training, we can generate random samples similar to the training data using $x_{\text{gen}} = f(z_{\text{gen}})$, where $z_{\text{gen}} \sim \mathcal{N}(0, I)$. Further investigation on the universality of density and manifold approximation of injective flows can be found in~\cite{puthawala2021universal}.

Injective flows, due to their low-dimensional latent space, parameterize a low-dimensional manifold embedded in the high-dimensional data space. During training, this manifold captures plausible samples, making it an effective regularizer for ill-posed inverse problems. \rev{The injective part provides a projection operator on the range of $g_{\gamma}$ as $P_{g_{\gamma}}(x):= g_{\gamma}(g_{\gamma}^\dagger(x))$ which maps the data samples $x$ to the intermediate space by $z^\prime = g_\gamma^\dagger(x)$  and projects them back to the data space by $g_\gamma(z^\prime)$.} Kothari et al. \cite{kothari2021trumpets} employed this projection operator to project a sample onto the manifold in iterative reconstruction schemes. In the next section, we introduce our methodology for solving inverse scattering problems using injective normalizing flows.

\section{MAP inference with Injective Flows for Inverse Scattering}
\label{sec: MAP}

Inverse scattering with partial data is a severely ill-posed inverse problem, which means that a small perturbation in the measurements of scattered fields can result in a significant error in the recovered contrast~\cite{chen2018computational}. As discussed in Section~\ref{sec: Problem Statement}, inverse scattering is a nonlinear inverse problem, with the degree of nonlinearity being strongly influenced by the maximum contrast value. Particularly for objects with large contrasts, the problem becomes highly nonlinear, further increasing the difficulty of the inversion. In such cases, the presence of a robust regularizer that effectively constrains the search space becomes crucially important.

We model the contrast $\chi = x \in \mathcal{X}$ and the scattered fields $E^s = y \in \mathcal{Y}$ as random vectors. For simplicity, we assume that the additive noise $\delta$ in~\eqref{eq: forward model} is a random vector with Gaussian distribution $\delta \sim \calN(0,{\sigma}^2 I)$ although our framework admits other distributions. With this assumption, the likelihood $p_{Y|X}$ can be expressed as,
\begin{equation}
    p_{Y|X}  = \calN(A(X),{\sigma}^2 I).
    \label{eq: noise_distribution}
\end{equation}
An effective approach for solving ill-posed inverse problems is to compute the maximum a posteriori (MAP) estimate, where we seek the solution $x$ that has the highest posterior likelihood given a measurement $y$,
\begin{equation}
    x_{\text{MAP}}  =  \argmax_x ~ \log p_{X|Y}(x|y),
\end{equation}
where $p_{X|Y}(x|y)$ denotes the posterior distribution, representing the conditional distribution of the image of interest given the measurements $y$. \rev{The posterior distribution $p_{X|Y}$ can be computed using Bayes theorem as,
\begin{equation}
    p_{X|Y}(x|y) = \dfrac{p_{Y|X}(y|x) p_X(x)}{\int_x p_{X,Y}(x,y)  dx},
    \label{eq: Bayes}
\end{equation}
which leads to the following expression for the MAP estimate,}
\begin{equation}
    x_{\text{MAP}} = 
    \argmin_x ~ -\log p_{Y|X}(y|x)  - \log p_X(x).
\end{equation}
From~\eqref{eq: noise_distribution} we get
\begin{equation}
    x_{\text{MAP}} =
    \argmin_x ~ \dfrac{1}{2}\|y - A(x) \|_2^2 - \lambda \log p_X(x) ,
    \label{eq:MAP}
\end{equation}
where the first term represents the data-consistency loss while $p_{X}(x)$ denotes the prior distribution of the contrast and yields a regularization term. We additionally insert $\lambda$ as a hyperparameter to adjust the weight of the regularization term as its value depends on the unknown noise power. In general, estimating the prior distribution $p_X$ is challenging, and a commonly used approximation is a Gaussian distribution with zero mean, leading to Tikhonov regularization. However, a Gaussian distribution often deviates significantly from the true prior, resulting in poor reconstructions.

This paper explores a data-driven regularization in inverse scattering based on deep generative models. We leverage a training set of contrast patterns $\{x^{(i)}\}_{i = 1}^N$ and train a deep generative model $x = f(z)$ to produce samples from (approximately) the same distribution as that of the training set.
By sampling from a Gaussian distribution in the latent space $z \in \mathcal{Z}$, we expect the trained generator $f$ to produce plausible contrast samples. This property of deep generative models makes them an effective regularizer for solving inverse problems~\cite{bora2017compressed, guo2022nonlinear}. 

In this paper, we employ injective flows as a generative prior due to their suitability for addressing ill-posed inverse problems~\cite{kothari2021trumpets}. We perform optimization in the latent space to find the latent code which produces a permittivity pattern compatible with the measurements $y$. The optimization problem can be formulated as follows,
\begin{equation}
    z_{\text{MAP}} =
    \argmin_z ~ \dfrac{1}{2}\|y - A(f(z)) \|_2^2 - \lambda \log p_{X}(f(z)),
    \label{eq: LSO}
\end{equation}
\rev{where the regularization term $\log p_{X}$ is approximated via \eqref{eq: density estimation}.} The reconstructed contrast is then obtained as $x_{\text{MAP}} = f(z_{\text{MAP}})$. We call this method latent space optimization (LSO). We note that~\eqref{eq: LSO} has been previously proposed by \cite{asim2020invertible,whang2020compressed} for solving compressed sensing inverse problems using regular normalizing flows.

Unlike the supervised learning methods for inverse scattering~\cite{li2018deepnis, wei2018deep,fajardo2019phaseless,ran2019electromagnetic}, which rely on paired training sets of contrast and scattered fields $\{(x^{(i)}, y^{(i)})\}_{i = 1}^{N}$, our framework is unsupervised, without the need for scattered fields during training. This eliminates the need to retrain the model when the distribution of scattered fields changes due to variations in the experimental configuration. Once the injective generator is trained on the contrast samples, we can directly optimize~\eqref{eq: LSO} for new measurements to reconstruct the corresponding contrast. In addition, our proposed method fully leverages the underlying physics of the scattering problem by optimizing over the complex-valued scattered fields in~\eqref{eq: LSO}. Kothari et al.~\cite{kothari2020learning} have demonstrated that incorporating wave physics into the neural network architecture can significantly enhance the quality of reconstructions, particularly for out-of-distribution data.

\rev{Invertibility of the injective generator allows us} to use an alternative method for~\eqref{eq: LSO} proposed by~\cite{kothari2021trumpets} for linear inverse problems. This method performs the optimization directly in the data space. We call this method data space optimization (DSO) and formulate it as follows,
\begin{equation}
    x_{\text{MAP}} =
    \argmin_x ~ \dfrac{1}{2}\|y - A(g(g^\dagger(x))) \|_2^2 -\lambda \log p_{X}(x)
    \label{eq: DSO}
\end{equation}
where $g(g^\dagger(x))$ represents the projection operator described in section~\ref{sec: Trumpets}.  \rev{Similar to LSO, the second term $\log p_{X}$ can be approximated using~\eqref{eq: density estimation} and acts as an additional regularizer.} In LSO the reconstructed point $x = f(z)$ always lies on the learned manifold; this is not the case for the DSO method, where the reconstructed image may deviate from the manifold. On the other hand, as we discuss next, DSO offers more flexibility in the choice of the initial guess.

\begin{figure}
    \centering
    \includegraphics[width =0.5\textwidth]{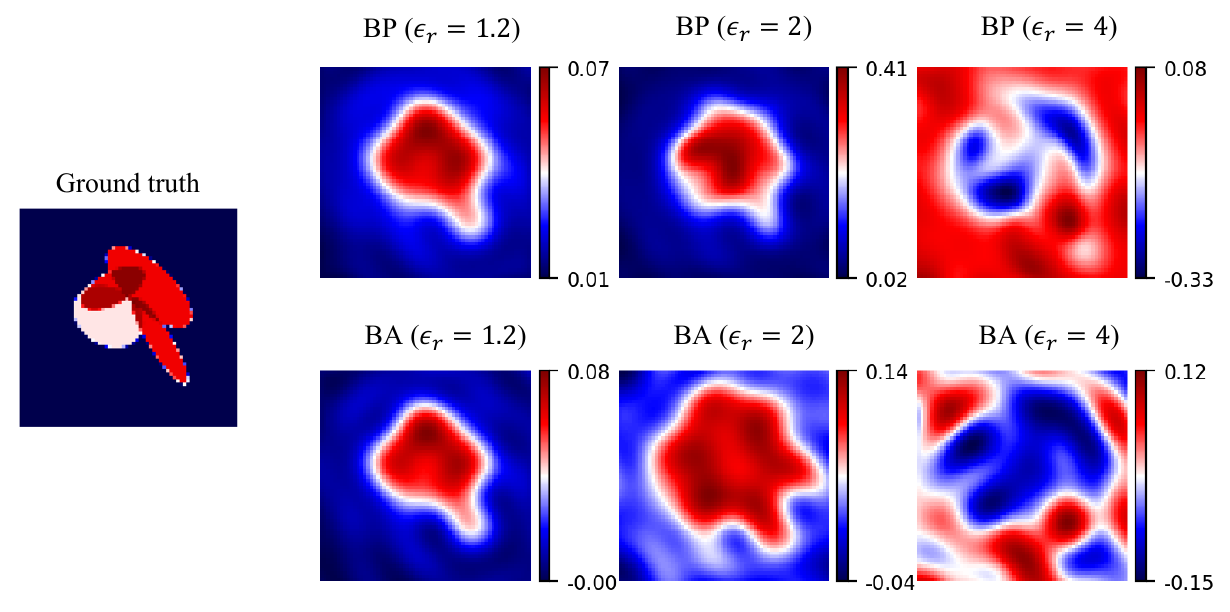}
    \caption{Performance analysis of the Back-Propagation (BP) and Born Approximation (BA) methods across objects with different maximum $\epsilon_r$ values. While both BP and BA reconstructions are visually meaningful for small $\epsilon_r$, their performance significantly deteriorates for objects with larger $\epsilon_r$.}
    \label{fig:Initial_Guesses}
\end{figure}

\begin{figure}
    \centering
    \includegraphics[width =0.35\textwidth]{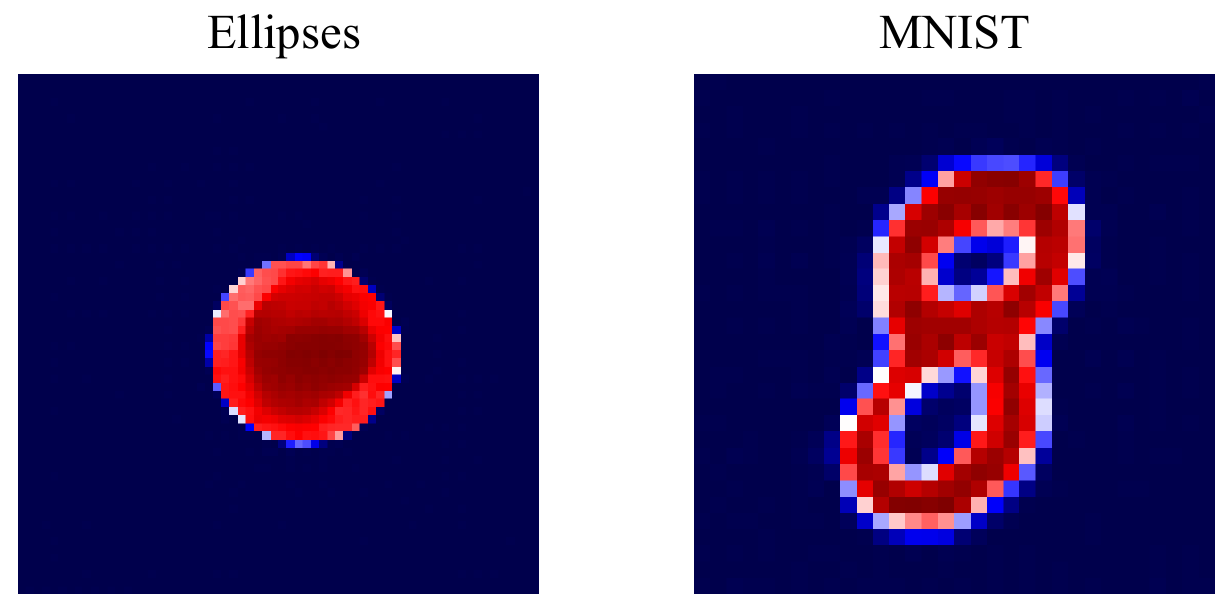}
    \caption{Illustration of the MOG initialization in the data space $f_\theta(z = 0)$ for ellipses and MNIST datasets.}
    \label{fig: MOG}
\end{figure}

\begin{figure*}
    \centering
    \includegraphics[width =0.85\textwidth]{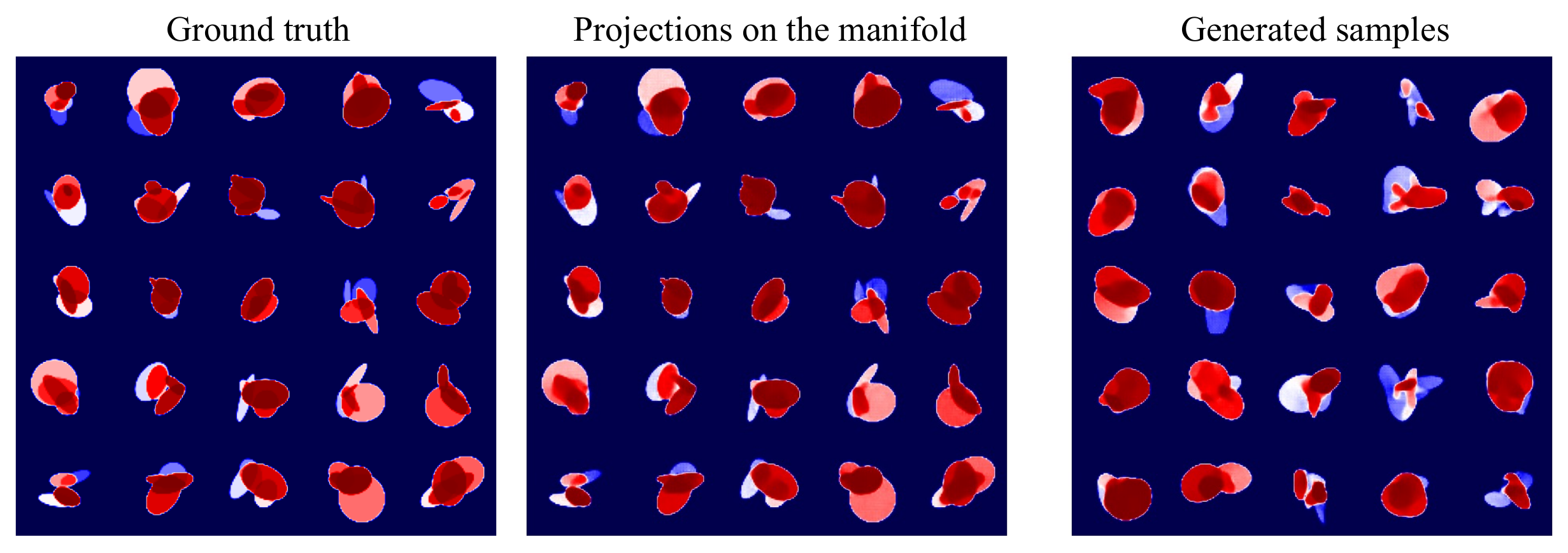}
    \caption{Performance evaluation of the trained injective flow on ellipses dataset; ground truth contrasts, their projections on the learned manifold and generated samples.}
    \label{fig: generated samples}
\end{figure*}

The choice of initial guess is important for inverse scattering solvers. A poor initialization may result in convergence to poor local minima due to nonlinearity. A good initial guess facilitates efficient convergence to good minima. The authors of \cite{chew1990reconstruction} used Born approximation as the initialization for the distorted Born iterative method (DBIM). A back-propagation (BP) solution was also used in \cite{van1997contrast, van1999extended} as an initial guess of the contrast source inversion (CSI) method. Figure~\ref{fig:Initial_Guesses} shows the ground truth, back-propagation (BP), and Born approximations (BA) for an object with different maximum $\epsilon_r$ values. While BP and BA may yield satisfactory results for objects with small permittivity, their performance sharply drops for large $\epsilon_r$ (especially numerically) which makes them a poor initialization for strong scatterers.

In order to circumvent this issue, we adopt a data-driven initialization
suggested in \cite{asim2020invertible}; mean of the Gaussian distribution (MOG) in the latent space which is set to 0. The MOG initialization $z = 0$, depicted in Figure~\ref{fig: Trumpets}, provides a fixed initialization with respect to the measurements (scattered fields); thereby being independent of the maximum contrast value and the problem configuration. This property leads to more robust convergence in both~\eqref{eq: LSO} and \eqref{eq: DSO} even for objects with large permittivity. \rev{While the DSO method can be initialized with both BP and MOG, the LSO should exclusively be initialized with MOG. This is due to the possibility of BP being significantly distant from the range of the injective network, making inversion to the latent space infeasible.} In
section~\ref{sec: Experiments}, we will show that the MOG significantly improves the quality of the reconstructions compared to BP, especially for strong scatterers.

\rev{
\section{Posterior Modeling and Uncertainty Quantification}
\label{sec: posterior modeling}

Due to ill-posedness, there are an infinite number of contrasts that are consistent with the measurements within the noise level. These diverse solutions can lead to different scientific interpretations, highlighting the need to characterize their distribution. Relying on a single estimate, such as the MAP estimate obtained in the previous section, fails to reflect the inevitable uncertainty and pinpoint features recovered only with low confidence. To address this drawback of point estimates, we adopt a Bayesian perspective. Rather than solely computing the MAP estimate, we approximate the \textit{full} posterior distribution $p_{X|Y}$ introduced in \eqref{eq: Bayes}. By doing so, we are able to generate many posterior samples which explore plausible permittivity patterns.

The computation of the posterior distribution, as stated in \eqref{eq: Bayes}, involves the integral $\int_x p_{X, Y}(x,y) dx$ which is intractable for high-dimensional imaging problems. Variational inference~\cite{hinton1993keeping,graves2011practical} is a promising framework that approximates the posterior distribution $p_{X|Y}(x|y)$ by defining a class of distributions $q_X(x;\psi)$ parameterized by $\psi$. The goal is to find the optimal $\psi$ that ensures the ``closeness'' between $q_X(x;\psi)$ and $p_{X|Y}(x|y)$ for a given $y$. Examples of such approximators include Gaussian mixture models and distributions induced by deep generative models.

In variational inference, a commonly used measure of fit is the Kullback--Leibler (KL) distance,
\begin{align*}
    \text{KL}(q \| p) 
    &= \int_\mathcal{X} q(x) \log\left(\dfrac{q(x)}{p(x)}\right) dx\\
    &= \E_{x \sim q}[ \log q(x) -  \log p(x)].
\end{align*}
We optimize $\psi$ to minimize the KL distance between $q_X(x;\psi)$ and $p_{X|Y}(x|y)$ for a given $y$,
\begin{align}
    \psi^* = \argmin_\psi \text{KL}(q_X(x;\psi) \| p_{X|Y}(x|y)).
    \label{eq: VI in data space}
\end{align}
Sun et al.~\cite{sun2021deep} parameterized $q_X(x;\psi)$ using an untrained normalizing flow through~\eqref{eq: density estimation} and directly performed the optimization over the network's weights.

We propose to leverage our pre-trained injective flow $f_\theta$ as a prior to approximate the posterior distribution. Our approach relies on the following principle: when we apply an injective mapping to the distributions $Q$ and $P$, resulting in new distributions $Q^\prime$ and $P^\prime$, respectively, the KL distance between $Q^\prime$ and $P^\prime$ remains the same as the KL distance between $Q$ and $P$ (for the formal theorem and the proof, refer to Section~\ref{sec: proof} in the appendix). This property of injective mappings motivates us to approximate the posterior distribution in the \textit{latent} space instead of the data space. Consequently, we minimize the KL distance between $q_Z(z,\psi)$ and $p_{Z|Y}(z|y)$ as follows,
\begin{align}
    \psi^* = & \argmin_\psi \text{KL}(q_Z(z,\psi) \| p_{Z|Y}(z|y)) \nonumber \\
     = & \argmax_\psi \mathbb{E}_{z \sim q_Z} \big[\log p_{Y|Z} (y|z) \big] - \text{KL}(q_Z \| p_Z) \nonumber \\
     = & \argmin_\psi \mathbb{E}_{z \sim q_Z} \big[\| y - A(f(z)) \|_2^2 \big] + \beta \big(\text{KL}(q_Z \| p_Z) \big),
    \label{eq: VI}
\end{align}
where $p_Z = \calN(0,I)$ represents the prior distribution introduced in~\eqref{eq: density estimation}. We consider $\beta$ as a
hyperparameter to control the diversity of the posterior samples as its value depends on the unknown noise power.

Now we must select our posterior approximator $q_Z(z,\psi)$. While previous works \cite{whang2021composing, kothari2021trumpets} used an additional normalizing flow to model $q_Z(z,\psi)$, we use a Gaussian distribution for simplicity and computational efficiency.
Specifically, we define $q_Z(z,\psi) = \calN(z;\mu_q, \text{diag}(\sigma_q))$, where $\psi = (\mu_q, \sigma_q)$ represents our variational parameters. This Gaussian parameterization of $q_Z(z,\psi)$ simplifies the KL term in \eqref{eq: VI} since there exists a closed-form expression for the KL distance between two Gaussian distributions,
\begin{align}
    \text{KL}(q_Z \| p_Z) = \dfrac{1}{2} \sum_{i = 1}^d \sigma_{q}(i)^2 + \mu_q(i)^2 -1 -2\log{\sigma_{q}(i)},
    \label{eq: KL between two Gaussians}
\end{align}
where $\mu_q(i)$ and $\sigma_q(i)$ denote the $i$th element of $\mu_q \in \R^d$ and $\sigma_q \in \R^d$, respectively.
Furthermore, since we have already obtained the MAP estimate in the latent space through \eqref{eq: LSO}, we set $\mu_q = z_{\text{MAP}}$ and only optimize $\sigma_q$.

We cannot directly optimize \eqref{eq: VI} using gradient-based methods since optimization variables are inside the expectation. We thus use the reparameterization trick~\cite{kingma2013auto,rezende2014stochastic}, letting $z = z_{\text{MAP}} + \sigma_q \odot t$, where $t \sim \calN(0, I)$ and $\odot$ denotes  the element-wise multiplication. By substituting \eqref{eq: KL between two Gaussians} into \eqref{eq: VI} and incorporating the above reparameterization,
\begin{align}
    \sigma_q^* = & \argmin_{\sigma_q}\mathbb{E}_{t \sim \calN(0,I)} \bigg[\| y - A(f(z_\text{MAP} + \sigma_q \odot t)) \|_2^2 \bigg] \nonumber \\
    & + \beta \sum_{i = 1}^d \bigg(\sigma_{q}(i)^2 -2\log{\sigma_{q}(i)} \bigg).
    \label{eq: Laplace approx true mean}
\end{align}
To evaluate the expectation, we compute the average over $K$ iid samples drawn from the standard normal distribution,
\begin{align}
    \sigma_q^* \approx & \argmin_{\sigma_q} \sum_{k = 1}^K \bigg(\| y - A(f(z_\text{MAP} + \sigma_q \odot t_k)) \|_2^2 \bigg) \nonumber \\
    & + \beta \sum_{i = 1}^d \bigg(\sigma_{q}(i)^2 -2\log{\sigma_{q}(i)} \bigg).
    \label{eq: Laplace approx}
\end{align}
Once we obtain the optimal $\sigma_q^*$, we can generate posterior samples $x_{\text{post}} = f(z_{\text{MAP}} + \sigma_q^* \odot t)$ where $t \sim \calN(0,I)$. Additionally, we can evaluate the empirical minimum mean-squared error (MMSE) estimate and the associated uncertainty by calculating the pixel-wise average and standard deviation over multiple posterior samples.}


\section{Computational Experiments}
\label{sec: Experiments}

\begin{figure*}
    \centering
    \begin{subfigure}{\textwidth}
    \centering
    \includegraphics[width = 0.99 \textwidth]{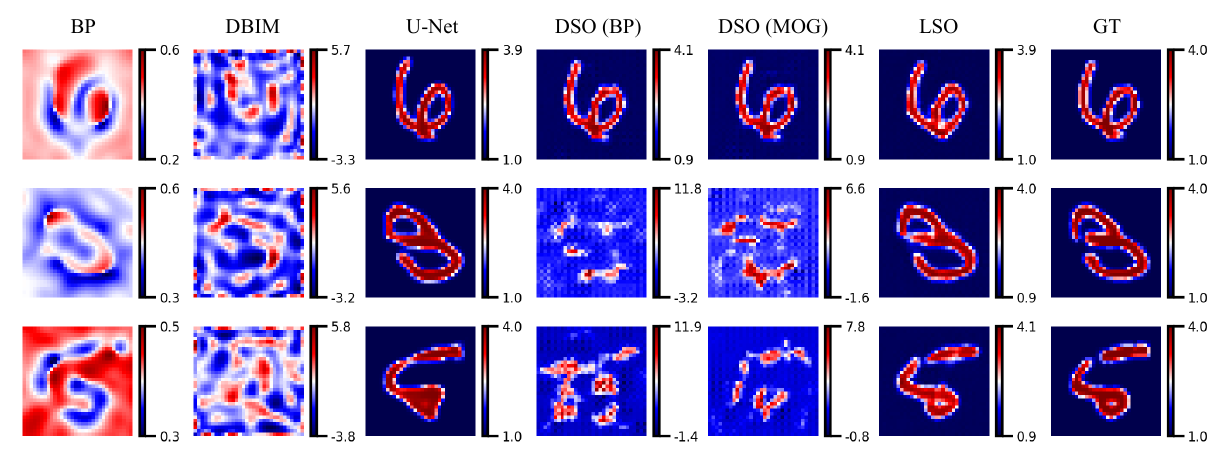}
    \caption{MNIST in resolution $32 \times 32$}
    \label{fig: mnist results}
    \end{subfigure}
    \begin{subfigure}{\textwidth}
    \centering
    \includegraphics[width = 0.99 \textwidth]{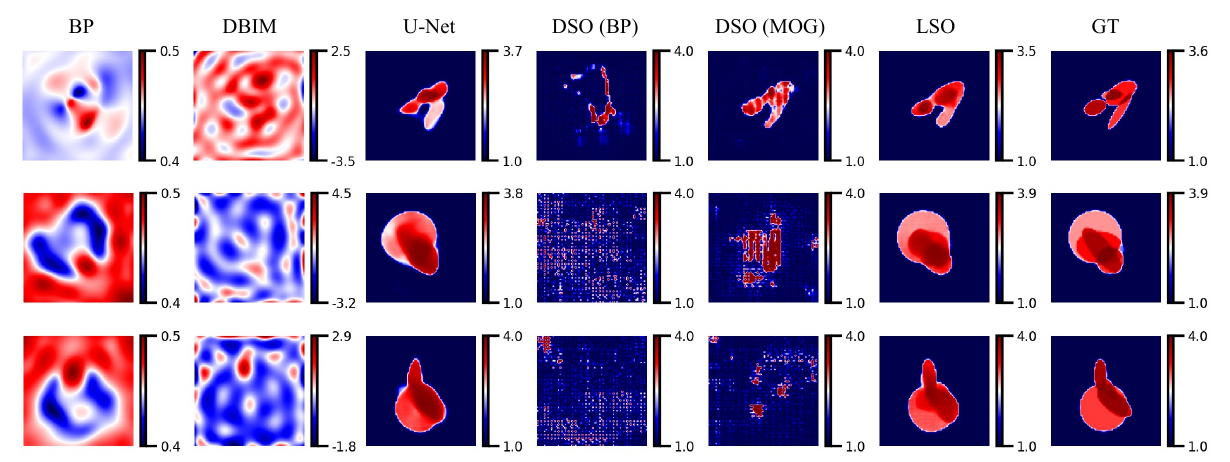}
    \caption{Ellipses in resolution $64 \times 64$}
    \label{fig: ellipses results}
    \end{subfigure}
    \caption{Performance comparison of different methods for objects with maximum $\epsilon_r = 4$.}
    \label{fig: MAP results}
\end{figure*}

We assess the performance of the proposed methods for MAP estimation and posterior modeling on synthetic and experimental data. We train the model on two synthetic large-scale datasets: 1) MNIST~\cite{lecun1998gradient} with 60000 training samples in the resolution $N = 32$, and 2) a more challenging dataset we generated comprising 60000 training samples with resolution $N = 64$ of overlapping ellipses used in~\cite{khorashadizadeh2022conditional}. Figure~\ref{fig: generated samples} shows example test contrasts, their projections on the learned manifold, and the samples generated by the injective network, verifying the ability of the model to produce outputs of good quality. For additional details about the network architecture and training, please refer to Section~\ref{sec: architecture and training} in the appendix. 

\subsection{Synthetic Data}
\rev{In experiments with synthetic data, the task is to reconstruct the test samples from MNIST and ellipses datasets that have not been ``seen'' by the injective network during training.}
We use $N_i = 12$ incident plane waves and $N_r = 12$ receivers, uniformly distributed on a circle with radius $R = 20$ cm around the object with maximum permittivity $\epsilon_r$ and dimension $D = 20$ cm. The working frequency is 3 GHz and we added 30 dB noise to the measurements of the scattered fields.

\paragraph{\textbf{MAP estimation}}
We conduct a comprehensive evaluation of the DSO and LSO methods. We consider the MOG and BP initializations for DSO while only using the MOG initialization for LSO. We compare the performance of our proposed methods with a traditional iterative method, DBIM~\cite{chew1990reconstruction}. While our approach is unsupervised so that the scattered fields are not used during training, we also compare its performance with a supervised learning method, the U-Net~\cite{ronneberger2015u}, which has enjoyed tremendous empirical success in a variety of imaging inverse problems including inverse scattering~\cite{wei2018deep}. The U-Net takes the BP image as input and regresses the corresponding permittivity.

\rev{We have fully implemented the forward operator in Tensorflow~\cite{abadi2016tensorflow}, enabling efficient GPU utilization for parallel reconstruction of multiple samples. Moreover, it allows us to use a variety of optimizers provided in Tensorflow including Adam~\cite{kingma2014adam} and L-BFGS~\cite{nocedal1999numerical}. In these experiments, we optimize~\eqref{eq: LSO} and~\eqref{eq: DSO} using the Adam optimizer with a learning rate of 0.05 for 300 iterations as it leads to more accurate reconstructions compared to L-BFGS.} We set $\lambda = 0.01$ for BP and $\lambda = 0$ for MOG. For the MOG initialization, we begin from high-likelihood regions (mean of the Gaussian), viewed as a hidden regularizer and we thus set $\lambda = 0$. Figure~\ref{fig: MOG} illustrates the MOG initializations for ellipses and MNIST datasets.

\begin{table}
\renewcommand{\arraystretch}{1.3}
\caption{Performance of different methods for solving inverse scattering ($\epsilon_r = 4$) averaged over 5 test samples.}
\centering
\begin{tabular}{ c| c c| c c  }
  & \multicolumn{2}{c|}{\textbf{PSNR}} & \multicolumn{2}{c}{\textbf{SSIM}} \\
 \hline
  & \textbf{MNIST} & \textbf{Ellipses} &  \textbf{MNIST} & \textbf{Ellipses} \\
\hline
  BP & 7.75 & 7.00 & 0.01 & 0.01 \\
  DBIM~\cite{chew1990reconstruction} & 5.77 & 4.67 & 0.01 & 0.01  \\
  U-Net~\cite{ronneberger2015u} & 24.26 & \textbf{21.94} & \textbf{0.90} &   0.82 \\
  DSO (BP) & 8.73 & 7.89  & 0.16 &  0.16 \\
  DSO (MOG) & 17.47 & 14.56 & 0.61 & 0.44 \\
  LSO (MOG) & \textbf{25.22} & 20.50 & 0.89 & \textbf{0.85} \\
\end{tabular}
\label{tab: Main results}
\end{table}

Figure~\ref{fig: MAP results} shows the performance of various methods for $\epsilon_r = 4$ using 5 test samples from MNIST and ellipses datasets. While DBIM falls short in this challenging task with a high contrast and 30 dB noise, DSO and LSO exhibit much better reconstructions. Moreover, the MOG initialization, as expected, yields superior reconstructions compared to BP. Notably, LSO outperforms DSO, demonstrating the advantages of running optimization in the latent space as discussed in Section~\ref{sec: MAP}. Despite not utilizing scattered fields during the training phase, LSO produces reconstructions of comparable or even superior quality to the supervised method U-Net. Table~\ref{tab: Main results} lists the numerical results in PSNR and SSIM averaged over 5 test samples.

As discussed in Section~\ref{sec: MAP}, the maximum $\epsilon_r$ of the object plays a significant role in the performance of inverse scattering solvers. Figure~\ref{fig: SNR for different er} shows the performance of various methods across different maximum $\epsilon_r$ values on MNIST. This analysis shows that LSO, combined with the MOG initialization, remains effective even for objects with high $\epsilon_r$, which highlights the significance of data-driven initialization and optimization in the latent space. 

\rev{Regarding the computational efficiency, we used a single Tesla V100 GPU for training and solving the inverse scattering problem where each iteration of LSO (or DSO) takes 0.08 seconds at the resolution of $N = 32$ and 0.25 seconds at the resolution of $N = 64$. Although good estimates can be obtained with much fewer iterations, we empirically determined that 300 iterations ensure good convergence.}

\begin{figure}
    \centering
    \includegraphics[width = 0.49\textwidth]{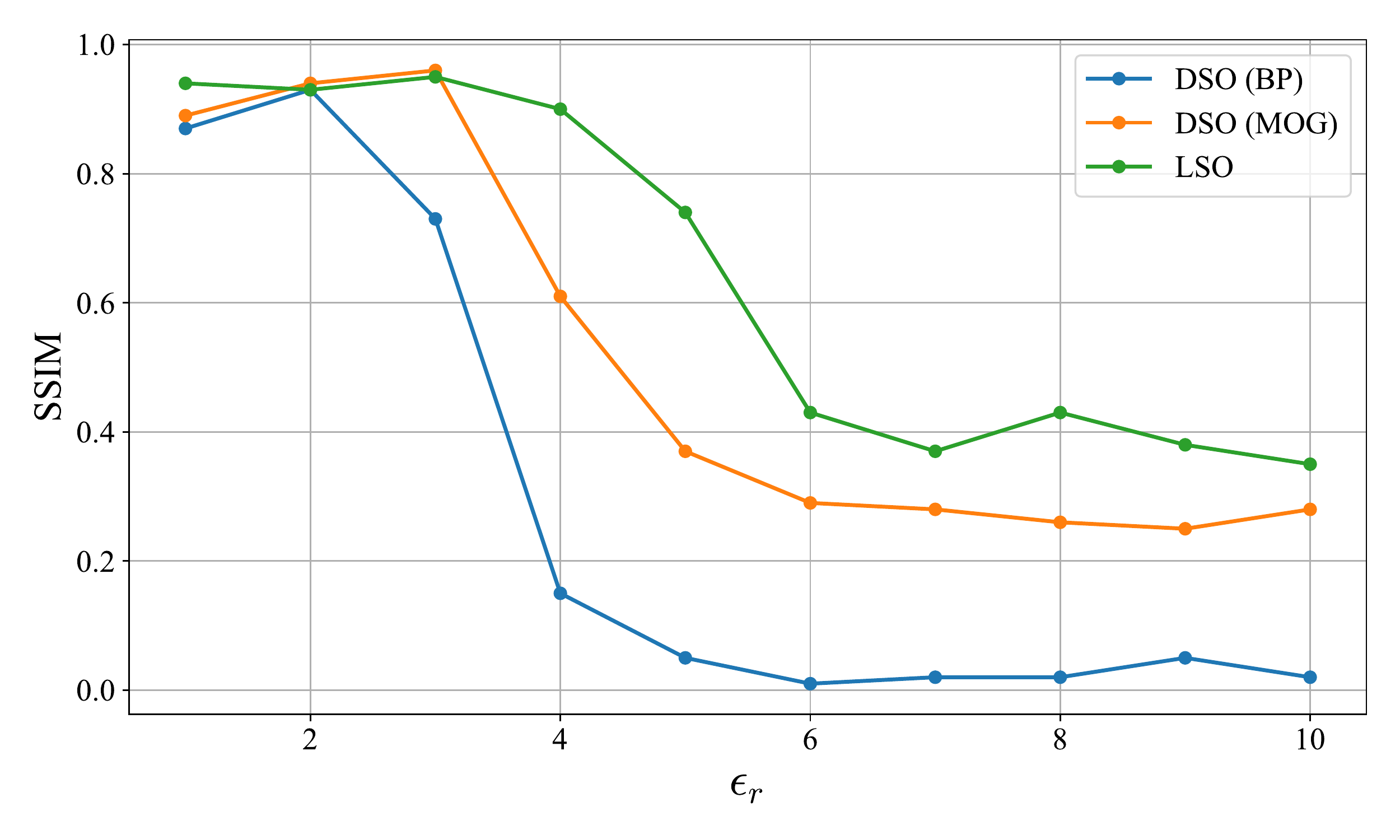}
    \caption{Performance of various methods across objects with different maximum $\epsilon_r$ values on the MNIST dataset.}
    \label{fig: SNR for different er}
\end{figure}

\rev{
\paragraph{\textbf{Posterior Sampling and UQ}}
As explained in Section~\ref{sec: posterior modeling}, we approximate the posterior distribution of contrast as a pushforward of a Gaussian around the MAP estimate in the latent space; the covariance is chosen to obtain the best variational approximation of the posterior in the sense of the KL divergence. We use the MAP estimate obtained from the LSO method in the previous section and optimize \eqref{eq: Laplace approx} using the Adam optimizer with a learning rate of $0.01$. The initial value for $\sigma_q$ is set as an all-one vector, and we use $K = 25$ random samples drawn from the standard Gaussian in each iteration. To compute the MMSE estimate and UQ, we calculate the pixel-wise average and standard deviation over 25 posterior samples. Figure~\ref{fig: posterior samples} showcases 4 posterior samples along with UQ and MMSE estimates for $\beta = 0.01$ and $\beta = 0.05$. As expected, larger $\beta$ values lead to more diverse posterior samples.  The UQ map identifies regions with higher uncertainty visually represented in red. This information is highly valuable for conducting a more thorough and informed analysis. Finally, the MAP estimate is sharper than the MMSE as expected.}

\begin{figure*}
    \centering
    \includegraphics[width = 0.99\textwidth]{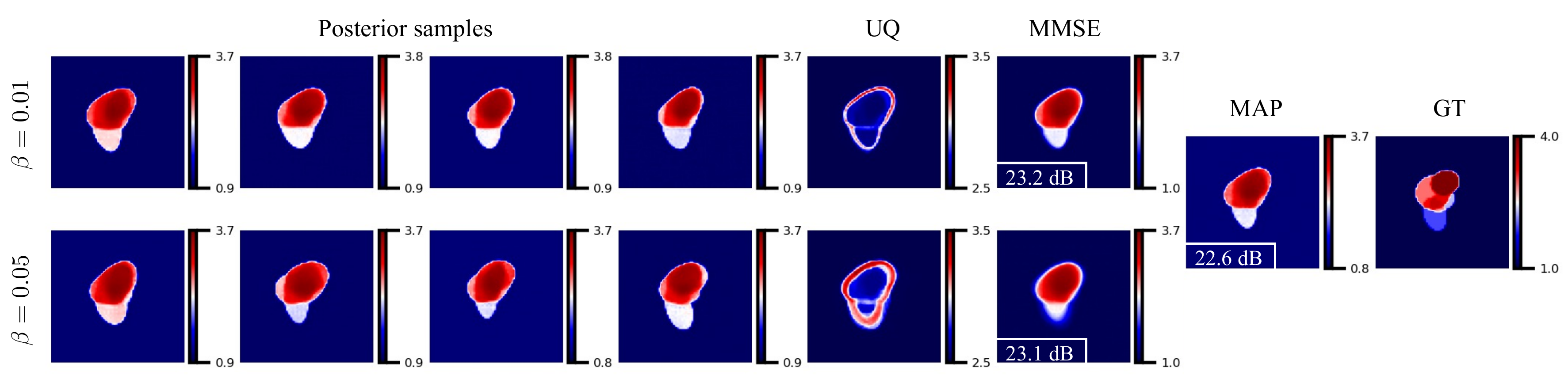}
    \caption{Posterior samples, UQ, MMSE, and MAP estimates for an object with $\epsilon_r = 4$ for $\beta = 0.01$ and $\beta = 0.05$; as expected, larger $\beta$ values lead to more diverse posterior samples.}
    \label{fig: posterior samples}
\end{figure*}

\rev{
\paragraph{\textbf{Generalization}}
In this section, we evaluate the generalization performance of the proposed method under out-of-distribution changes in the permittivity patterns. We train injective flows exclusively on MNIST digits 0-5 and use the remaining digits for testing. The LSO solver is configured with the same setup as in the previous section. Figure~\ref{fig: posterior ood} shows the posterior samples, UQ, MMSE, and MAP estimates for two test samples of digits 6 and 8 with $\beta = 0.05$. This experiment clearly shows the effectiveness of the proposed method in handling out-of-distribution data. We should point out that there exists a trade-off between regularization power and generalization performance, governed by the dimension of the latent space. Larger latent space dimensions yield better generalization but less effective regularization. This has also been observed in regular normalizing flows, where matching dimensions in the latent and data space result in excellent generalization over out-of-distribution data but less effective regularization \cite{asim2020invertible,pmlr-v202-liu23au}.

\begin{figure*}
    \centering
    \includegraphics[width = 0.99\textwidth]{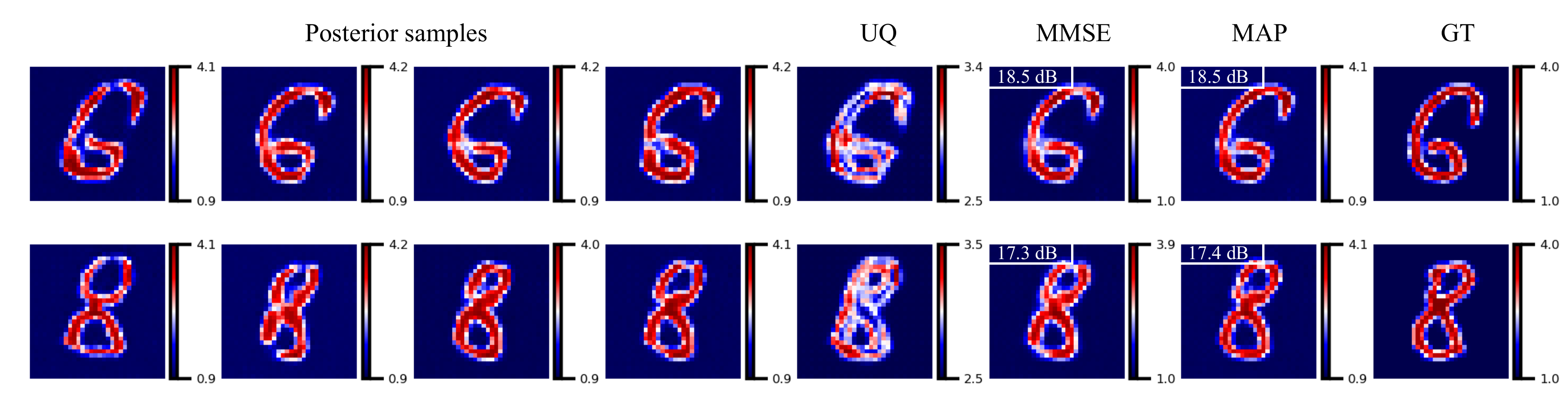}
    \caption{Reconstructions and UQ for out-of-distribution samples with $\epsilon_r = 4$. Despite being trained solely on MNIST digits 0-5, the proposed method exhibits excellent generalization by accurately reconstructing digits 6-9.}
    \label{fig: posterior ood}
\end{figure*}

\subsection{Experimental Data}
We finally evaluate our proposed model on FoamDielExt and FoamTwinDiel: real experimental data for two phantoms provided by the Institute Fresnel in Marseille, France~\cite{geffrin2005free}. In these experiments, there are $N_i = 8$ transmitters and 241 receivers located on a circle with radius $R = 1.67$ m. Out of those, we only use $N_r = 20$ receivers to make the inversion more challenging. Additional details about the setup are discussed in~\cite{geffrin2005free}. As shown in Figure~\ref{fig: fresnel gt}, FoamDielExt and FoamTwinDiel consist of dielectric cylinders in a vacuum background. We use the measurements at the working frequency of 3 GHz, and the side length of the investigation domain is $D = 20$ cm. 

We use two pre-trained injective flows on the ellipses dataset for resolutions $N = 32$ and $N = 64$. The inverse scattering problem is solved using \eqref{eq: LSO} for MAP estimation and \eqref{eq: Laplace approx} for posterior modeling. We added the total-variation (TV) regularization term to \eqref{eq: LSO} and \eqref{eq: Laplace approx} to further improve the quality of the reconstruction. The TV-norm multiplier is 0.1 and 0.08 for resolutions $N = 32$ and $N = 64$, respectively.
Figure~\ref{fig: fresnel results} shows posterior samples, UQ, MMSE, and MAP estimates.
Despite the idealized forward operator 
 and the substantial dissimilarity between the ground truth (two or three circles) and the training data (combinations of four ellipses with random positions and contrasts), the proposed framework produces satisfactory reconstructions. This experiment illustrates the robustness of the proposed method to noise and variations in experimental configuration. It also showcases the importance of posterior modeling: while the MAP and MMSE estimates in Figure~\ref{fig: fresnel_results_32} wrongly reconstruct the larger circle as compared to the ground truth, the uncertainty maps clearly signal that this part of the recovered contrast is not reliable.
}

\begin{figure}
\centering
\begin{subfigure}{0.25\textwidth}
  \centering
 \includegraphics[width=0.9\textwidth]{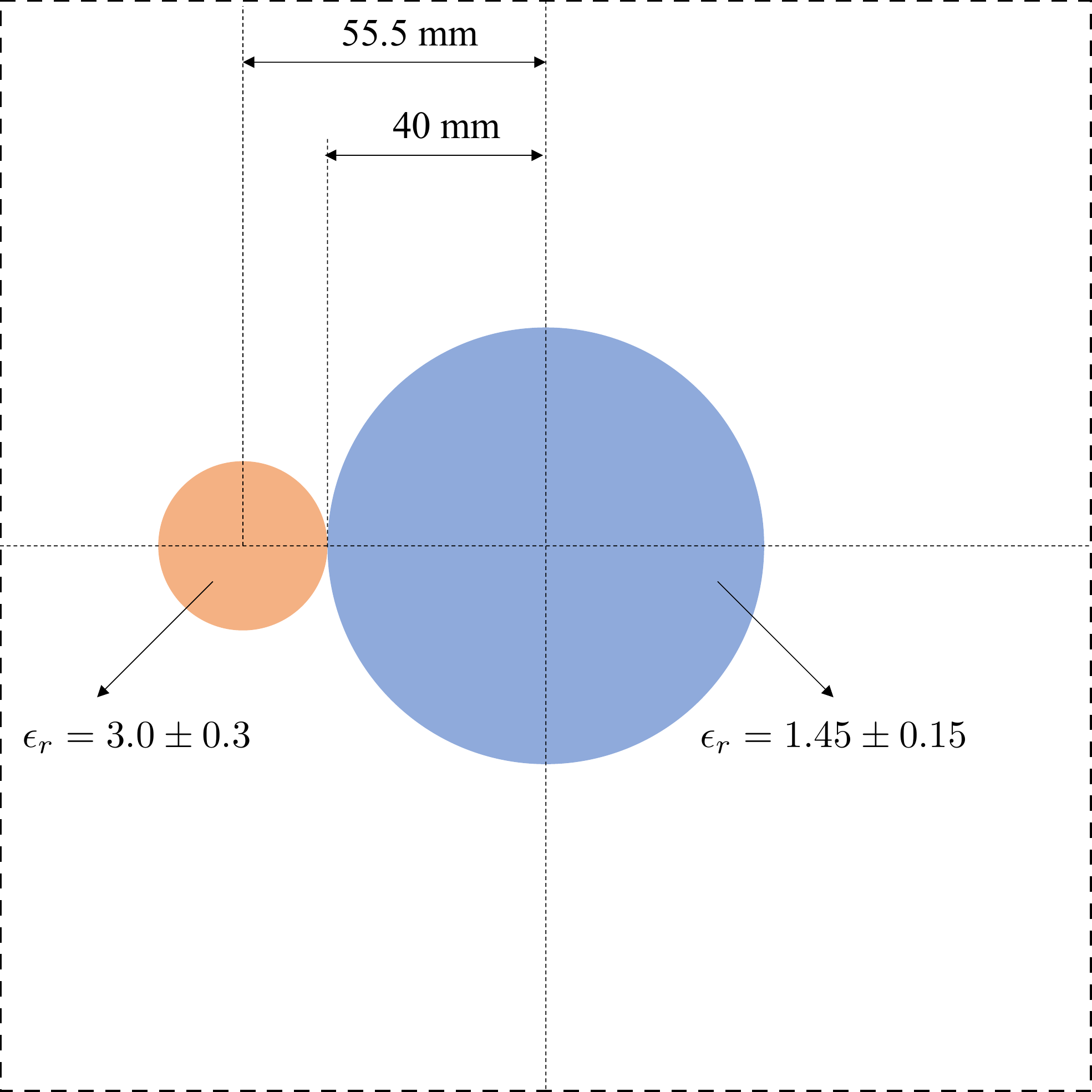}
\caption{FoamDielExt}
\end{subfigure}%
\begin{subfigure}{0.25\textwidth}
\centering
\includegraphics[width=0.9\textwidth]{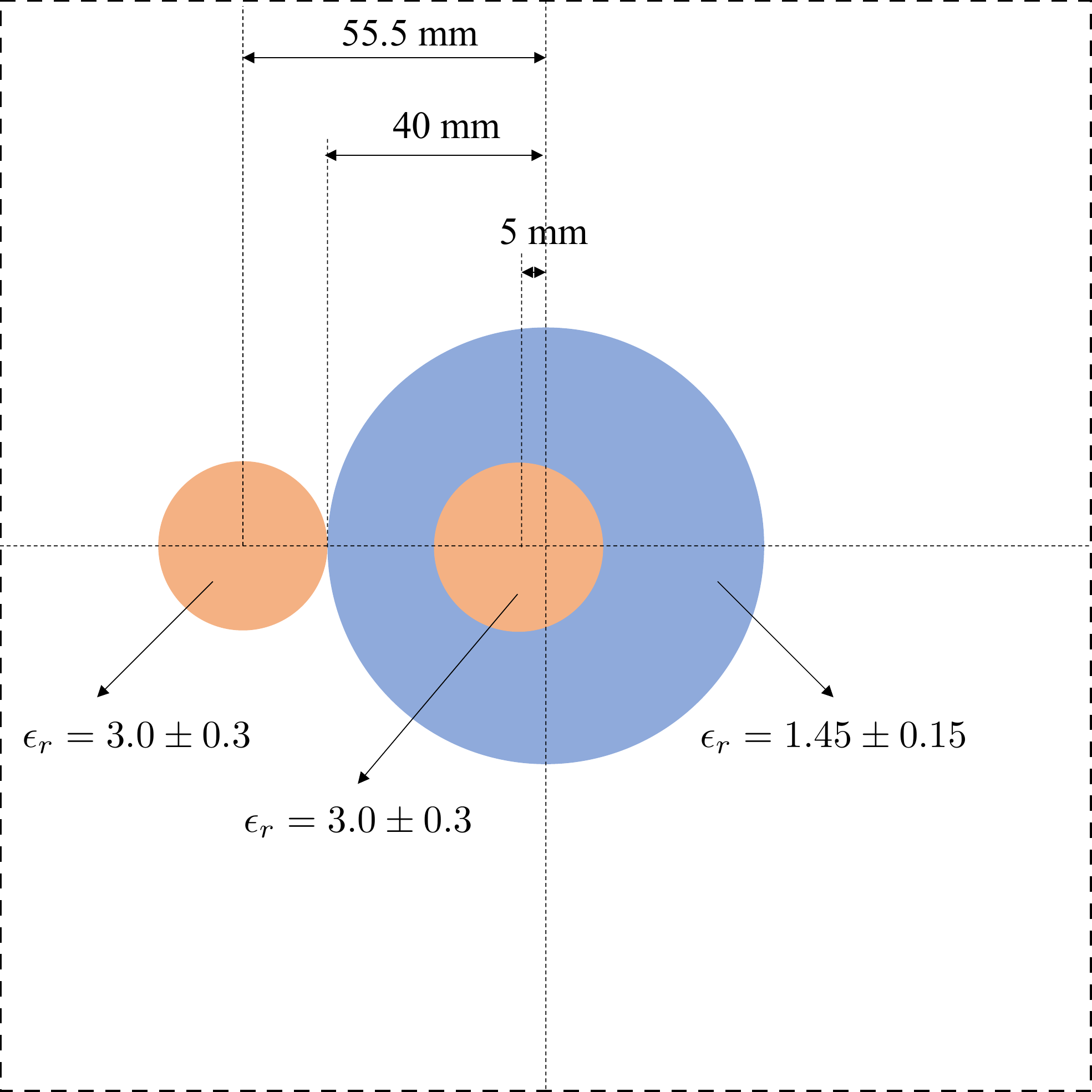}
\caption{FoamTwinDiel}
\end{subfigure}
\caption{Experimental Fresnel data~\cite{geffrin2005free}
}
\label{fig: fresnel gt}
\end{figure}

\begin{figure*}
\centering
\begin{subfigure}{\textwidth}
  \centering
 \includegraphics[width=\textwidth]{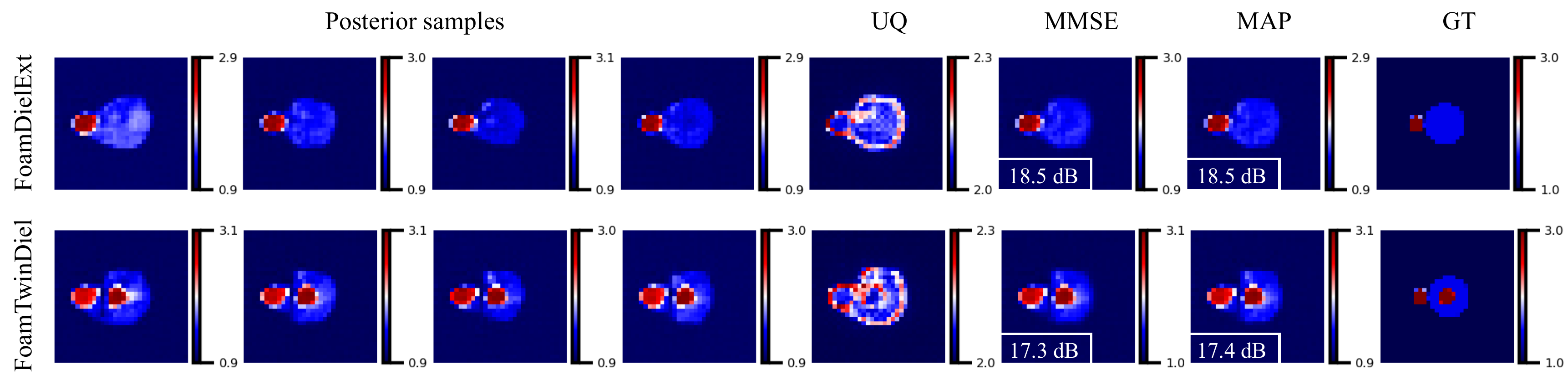}
\caption{Resolution $32 \times 32$}
\label{fig: fresnel_results_32}
\end{subfigure}
\begin{subfigure}{\textwidth}
\centering
\includegraphics[width=\textwidth]{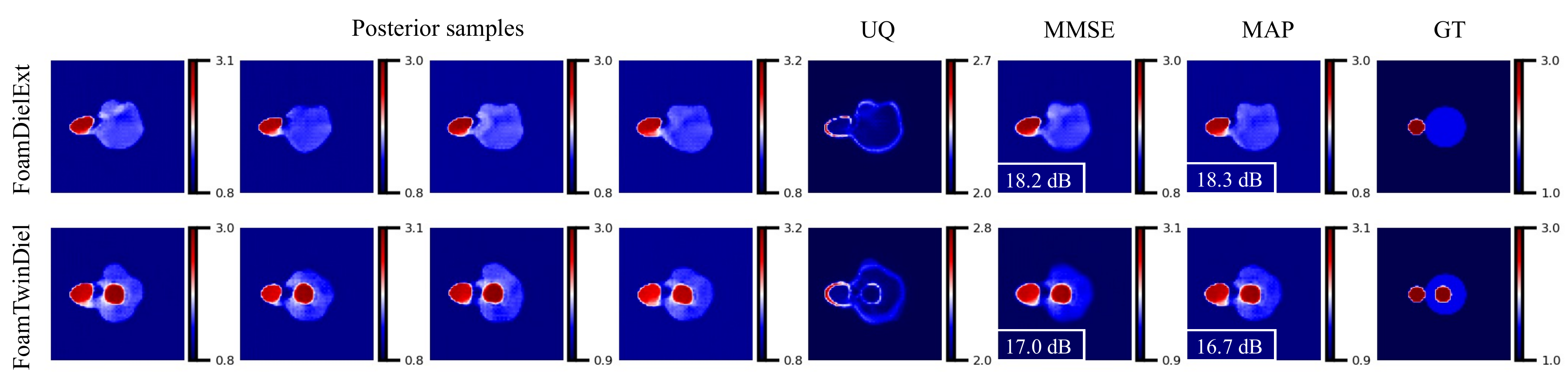}
\caption{Resolution $64 \times 64$}
\label{fig: fresnel_results_64}
\end{subfigure}
\caption{Posterior samples, UQ, MMSE, and MAP estimates for experimental Fresnel data. The uncertainty maps clearly signify the importance of posterior modeling by assigning higher uncertainty to wrongly reconstructed areas (red regions).
}
\label{fig: fresnel results}
\end{figure*}

\section{Limitations and Conclusions}
\label{sec: conclusion}
We proposed a data-driven framework for inverse scattering using an injective prior. The proposed method fully exploits the physics of wave scattering while benefiting from a data-driven initialization resulting in a powerful solver even for objects with a large contrast. The invertible generator admits optimization in both latent and data space and uses either a data-driven initialization or a back-projection. We showed that optimization in the latent space and with the latent Gaussian center as the initial guess significantly outperforms traditional iterative methods and even gives reconstructions comparable to a strong supervised method, the U-Net. 

\noindent \emph{Limitations and Future Works:}

The proposed framework has several key limitations. It requires running an iterative method at test time, which is slow and impractical for real-time applications. \rev{Moreover, iterative methods can converge to local minima even with clever initialization.} To speed up convergence, one may consider a more accurate initial guess by exploiting physics in the data-driven initialization via a combination of traditional back-projection (like BP) and data-driven initializations (like MOG). \rev{Furthermore, while the L-BFGS optimizer didn't improve the convergence rate in our experiments, other Newton's family optimizers may improve the convergence rate as shown in~\cite{guo2022nonlinear}.
Additionally, forcing the reconstruction to be within the range of an injective flow can introduce undesired bias and artifacts in certain applications.} Recently, Hussein et al.~\cite{hussein2020image} optimized the generator weights with a small rate after finding the optimal latent code in~\eqref{eq: LSO} to further improve the reconstructions; this idea might be adapted to our framework. We leave addressing these limitations for future work.

\appendix
\subsection{Network Architecture and Training Details} 
\label{sec: architecture and training}
\rev{The injective subnetwork $g_{\gamma}$ is composed of 6 injective revnet blocks described in Section~\ref{sec: Trumpets}, each increasing the dimension by a factor of 2. To enhance the expressiveness of the model, we insert 36 bijective revnet blocks between them. We choose a latent space of dimension 64 which provides a compression rate of $98.5\%$ for resolution $N = 64$ and $93.7\%$ for resolution $N = 32$. The bijective subnetwork $h_\eta$  is constructed using 20 bijective revnet blocks.}

\rev{We normalize the training data between 0 and 1 before training the model. We then multiply the output of the trained network by the maximum contrast of the dataset before using it as the generative prior.} We train the injective subnetwork $g_\gamma$ for 150 epochs to ensure the training samples (contrast patterns) align with the generator's range. Following this, we train the bijective subnetwork $h_\eta$ for 150 epochs to maximize the likelihood of the training samples in the intermediate space.

\subsection{The Invariance of KL Distance under Injective Mappings}
\label{sec: proof}
\begin{lemma} We assume probability distributions $q_Z$ and $p_Z $ have the same support. We let $q_X = f_\# q_Z$ and $p_X = f_\# p_Z$ where $f_\# p$ denotes the
pushforward of $p$ via mapping $f$, i.e., for every $x$ from $p$, $f(x)$ is a sample from $f_\# p$~\footnote{For simplicity we lightly abuse notation by identifying a probability measure and its density.}. If $f$ is injective then it holds,
\begin{align}
    \text{KL}(q_X \| p_X) = \text{KL}(q_Z \| p_Z)
\end{align}
\end{lemma}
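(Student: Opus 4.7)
The plan is to reduce everything to a single change of variables. Since $f$ is injective, the pushforward measures $q_X=f_\#q_Z$ and $p_X=f_\#p_Z$ are both supported on the same set, namely the image $f(\mathcal{Z})$ (a $d$-dimensional submanifold of $\mathbb{R}^D$ when $f:\mathbb{R}^d\to\mathbb{R}^D$ with $d\le D$). First I would fix a common reference measure on $f(\mathcal{Z})$, namely the $d$-dimensional Hausdorff measure $\mathcal{H}^d$, and use the area formula to write, for any $z$ in the domain and $x=f(z)$,
\begin{equation*}
    q_X(x) \;=\; \frac{q_Z(z)}{\sqrt{\det(J_f(z)^\top J_f(z))}}, \qquad p_X(x) \;=\; \frac{p_Z(z)}{\sqrt{\det(J_f(z)^\top J_f(z))}},
\end{equation*}
where both densities are with respect to $\mathcal{H}^d$ on $f(\mathcal{Z})$.

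Next I would observe the key cancellation: the Jacobian factor $\sqrt{\det(J_f^\top J_f)}$ is common to numerator and denominator, so
\begin{equation*}
    \frac{q_X(f(z))}{p_X(f(z))} \;=\; \frac{q_Z(z)}{p_Z(z)}.
\end{equation*}
Note this ratio is well defined because $q_Z$ and $p_Z$ share support, which by injectivity transfers to $q_X$ and $p_X$ sharing support in $f(\mathcal{Z})$.

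Finally I would plug these relations into the definition of KL divergence and again apply the area formula, this time to convert the integral over $f(\mathcal{Z})$ back into an integral over $\mathcal{Z}$:
\begin{align*}
    \mathrm{KL}(q_X\|p_X)
    &= \int_{f(\mathcal{Z})} q_X(x)\,\log\frac{q_X(x)}{p_X(x)}\,d\mathcal{H}^d(x) \\
    &= \int_{\mathcal{Z}} \frac{q_Z(z)}{\sqrt{\det(J_f^\top J_f)}} \,\log\frac{q_Z(z)}{p_Z(z)}\,\sqrt{\det(J_f^\top J_f)}\,dz \\
    &= \int_{\mathcal{Z}} q_Z(z)\,\log\frac{q_Z(z)}{p_Z(z)}\,dz \;=\; \mathrm{KL}(q_Z\|p_Z).
\end{align*}
The main subtlety is handling the non-square Jacobian correctly (using $\sqrt{\det(J_f^\top J_f)}$ rather than $|\det J_f|$, which only makes sense when $d=D$) and justifying that the area formula applies, which it does as long as $f$ is a smooth injective immersion; this is satisfied by the injective flow architectures of Section~\ref{sec: Trumpets} by construction. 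Alternatively, one can bypass the manifold bookkeeping by invoking the data processing inequality in both directions: $f$ gives $\mathrm{KL}(q_X\|p_X)\le\mathrm{KL}(q_Z\|p_Z)$, and the left inverse $f^{-1}:f(\mathcal{Z})\to\mathcal{Z}$ (which exists by injectivity) gives the reverse inequality, yielding equality. I would include the change of variables derivation as the primary argument since it is constructive and exposes the cancellation mechanism.
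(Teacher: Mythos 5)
Your proof is correct and follows essentially the same route as the paper's: apply the injective change-of-variables formula to both pushforward densities, observe that the $\sqrt{\det(J_f^\top J_f)}$ factors cancel in the log-ratio, and change variables back to the latent space. Your version is slightly more explicit about the reference measure (Hausdorff measure on the image) and the data-processing-inequality alternative is a nice aside, but the core mechanism is identical to the paper's argument.
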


\begin{proof}
The change of variable for the injective mapping $f$ yields~\cite{boothby1986introduction},
\begin{align}
\label{eq:posterior_distribution}
\log p_{X}(x) = \log p_Z(z)
- \dfrac{1}{2}\log |\det [J_{f}(z)^T
 J_{f}(z)]|,
\end{align}
where $z = f^{\dagger}(x)$ and is valid for $x \in \text{Range}(f)$. Now, we can compute the KL distance in the data space as follows,
\begin{align}
    \text{KL}(q_X \| p_X) = & \E_{x \sim q_X}[ \log q_X(x) -  \log p_X(x)] \nonumber \\ 
     = & \E_{x \sim q_X}[ \log q_Z(z)
 - \dfrac{1}{2}\log |\det [J_{f}(z)^T
 J_{f}(z)]| \nonumber \\
& - \log p_Z(z)  + \dfrac{1}{2}\log |\det [J_{f}(z)^T
 J_{f}(z)]|] \nonumber \\
 = & \E_{x \sim q_X}[ \log q_Z(z) -  \log p_Z(z)] \nonumber \\
= & \E_{z \sim q_Z}[ \log q_Z(z) -  \log p_Z(z)] \nonumber \\
= & \text{KL}(q_Z \| p_Z), \nonumber
\end{align}
which establishes the lemma.
\end{proof}

\newpage
\bibliographystyle{IEEEtran}
\bibliography{Paper}

\newpage

 




\vfill
\end{document}